\DeclareSymbolFont{bbold}{U}{bbold}{m}{n}
\DeclareSymbolFontAlphabet{\mathbbold}{bbold}
\newtheorem{theorem}{Theorem}
\newtheorem{definition}[theorem]{Definition}
\newtheorem{lemma}[theorem]{Lemma}
\newtheorem{corollary}[theorem]{Corollary}
\newtheorem*{remark}{Remark}
\newcommand\E{\mathbb{E}} % expectation
\newcommand\prob{\mathbb{P}} % probability
\newcommand\ind{\mathbbold{1}}  % indicator function
\newcommand\bigO{\mathcal{O}} % big O notation
\newcommand\atransition{P^{\text{active}}} % active transition matrix
\newcommand\ptransition{P^{\text{passive}}} % passive transition matrix
\newcommand\bandit{\theta} % a bandit environment
\newcommand\bandits{\Theta} % a family of bandit environments
\newcommand\trueBandit{\theta^{\star}} % the true bandit environment
\newcommand\truePolicy{\pi^{\star}} % the true competitor
\newcommand\prior{Q} % prior distribution
\newcommand\posterior{Q} % porsterior distribution
\newcommand\sumtt{\sum_{t=1}^T} % sum from 1 to T
\newcommand\sumtl{\sum_{t=1}^L} % sum from 1 to L
\newcommand\sumin{\sum_{i=1}^N} % sum from 1 to N
\newcommand\sumll{\sum_{l=1}^{m}} % sum over episodes
\newcommand\history{\mathcal{H}} % history
\newcommand\eplen{L} % episode length
\newcommand\epnum{m} % the number of episodes
\newcommand\T{\mathcal{T}} % Bellman operator
\newcommand\initDist{\rho} % initial distribution
\newcommand{\kibitz}[2]{\ifnum\comments=1{\color{#1}{#2}}\fi}
\title{Regret Bounds for Thompson Sampling in \\Episodic Restless Bandit Problems}
\author{%
Young Hun Jung\\
  Department of Statistics\\
  University of Michigan\\
%Pittsburgh, PA 15213 \\
  \texttt{yhjung@umich.edu} \\
\And
Ambuj Tewari\\
  Department of Statistics\\
  University of Michigan\\
%Pittsburgh, PA 15213 \\
  \texttt{tewaria@umich.edu} \\
  % examples of more authors
  % \And
  % Coauthor \\
  % Affiliation \\
  % Address \\
  % \texttt{email} \\
  % \AND
  % Coauthor \\
  % Affiliation \\
  % Address \\
  % \texttt{email} \\
  % \And
  % Coauthor \\
  % Affiliation \\
  % Address \\
  % \texttt{email} \\
  % \And
  % Coauthor \\
  % Affiliation \\
  % Address \\
  % \texttt{email} \\
}
\begin{document}

\maketitle

\begin{abstract}
Restless bandit problems are instances of non-stationary multi-armed bandits. These problems have been studied well from the optimization perspective, where the goal is to efficiently find a near-optimal policy when system parameters are known. However, very few papers adopt a learning perspective, where the parameters are unknown. In this paper, we analyze the performance of Thompson sampling in episodic restless bandits with unknown parameters. We consider a general policy map to define our competitor and prove an $\tilde{\bigO}(\sqrt{T})$ Bayesian regret bound. Our competitor is flexible enough to represent various benchmarks including the best fixed action policy, the optimal policy, the Whittle index policy, or the myopic policy. We also present empirical results that support our theoretical findings. 
\end{abstract}

\section{Introduction}
\textit{Restless bandits} \citep{whittle1988restless} are variants of multi-armed bandit (MAB) problems \citep{robbins1952some}. Unlike the classical MABs, the arms have non-stationary reward distributions. Specifically, we will focus on the class of restless bandits whose arms change their states based on Markov chains. Restless bandits are also distinguished from \textit{rested bandits} where only the active arms evolve and the passive arms remain frozen. We will assume that each arm changes according to two different Markov chains depending on whether it is played or not. Because of their extra flexibility in modeling non-stationarity, restless bandits have been applied to practical problems such as dynamic channel access problems \citep{liu2011logarithmic, liu2013learning} and online recommendation systems \citep{meshram2017restless}. 

Due to the arms' non-stationary nature, playing the same set of arms for every round usually does not produce the optimal performance. This makes the optimal policy highly non-trivial, and \citet{papadimitriou1999complexity} show that it is generally PSPACE hard to identify the optimal policy for restless bandits. As a consequence, many researchers have been devoted to find an efficient way to approximate the optimal policy \citep{liu2010indexability, meshram2018whittle}. This line of work primarily focuses on the \textit{optimization} perspective in that the system parameters are already known. 

Since the true system parameters are unavailable in many cases, it becomes important to examine restless bandits from a \textit{learning} perspective. Due to the learner's additional uncertainty, however, analyzing a learning algorithm in restless bandits is significantly challenging. \citet{liu2011logarithmic, liu2013learning} and \citet{tekin2012online} prove $\bigO(\log T)$ bounds for confidence bound based algorithms, but their competitor always selects a fixed set of actions, which is known to be weak (see Section \ref{sec:exp} for an empirical example of the weakness of the best fixed action competitor). \citet{dai2011non, dai2014online} show $\bigO(\log T)$ bounds against the optimal policy, but their assumptions on the underlying model are very limited. \citet{ortner2012regret} prove an $\tilde{\bigO}(\sqrt T)$ bound in general restless bandits, but their algorithm is intractable in general. 

In a different line of work, \citet{osband2013more} study Thompson sampling in the setting of a fully observable Markov decision process (MDP) and show the Bayesian regret bound of $\tilde{\bigO}(\sqrt T)$ (hiding dependence on system parameters like state and action space size). Unfortunately, this result is not applicable in our setting as ours is partially observable due to bandit feedback. Following \citet{ortner2012regret}, it is possible to transform our setting to the fully observable case, but then we end up having exponentially many states, which restricts the practical utility of existing results.

In this work, we analyze Thompson sampling in restless bandits where the system resets at the end of every fixed-length episode and the rewards are binary. 
We emphasize that this episodic assumption simplifies our analysis as the problem boils down to a \textit{finite time horizon} problem. 
This assumption can be arguably limited, but there are applications such as dynamic channel access problems where the channel provider might reset their system every night for a maintenance-related reason and the episodic assumption becomes natural.
We directly tackle the partial observability and achieve a meaningful regret bound, which when restricted to the classical MABs matches the Thompson sampling result in that setting. We are not the first to analyze Thompson sampling in restless bandits, and \citet{meshram2016optimal} study this type of algorithm as well, but their regret analysis remains in the one-armed-case with a fixed reward of not pulling the arm. They explicitly mention that a regret analysis of Thompson sampling in the multi-armed case is an interesting open question.

\section{Problem setting}
\label{section:setting}
We begin by introducing our setting. There are $K$ arms indexed by $k =1, \cdots, K$, and the algorithm selects $N$ arms every round. We denote the learner's action at time $t$ by a binary vector $A_{t} \in \{0, 1\}^{K}$ where $||A_{t}||_1 = N$. We call the selected arms as \textit{active} and the rest as \textit{passive}. We assume each arm $k$ has binary states, $\{0, 1\}$, which evolve as a Markov chain with transition matrix either $\atransition_{k}$ or $\ptransition_{k}$, depending on whether the learner pulled the arm or not.

%An arm $k$ has a state space $\states_{k}$ which evolves as a Markov chain with transition matrix either $\atransition_{k}$ or $\ptransition_{k}$, depending on whether the learner pulled the arm or not. 

%At round $t$, an arm $k$ generates a Bernoulli reward $X_{t, k}$, whose success rate solely depends on the arm's current state. Then the learner's reward of playing an action $A_{t}$ becomes $A_{t} \cdot X_{t}$. 
At round $t$, pulling an arm $k$ incurs a binary reward $X_{t, k}$, which is the arm's current state. As we are in the bandit setting, the learner only observes the rewards of active arms, which we denote by $X_{t, A_{t}}$, and does not observe the passive arms' rewards nor their states. This feature makes our setting to be a \textit{partially observable Markov decision process}, or POMDP. We denote the history of the learner's actions and rewards up to time $t$ by $\history_t = (A_1 , X_{1, A_1}, \cdots, A_t, X_{t, A_t})$. 

%Since the states are unrevealed, we can simply consider the observed rewards as the arms' states, and then they still evolve according to Markov chains.
%\ambuj{are you sure this is true? that is, if the state process is a Markov chain and the reward success probability is a deterministic function of the state, then the reward process is also Markov? don't you get a hidden Markov process?} \ambuj{a related issue is that we don't say anything about the sizes of the state spaces underlying different actions. our results clearly don't depend on it. is this natural? why should the regret not depend on the number of states a given actions can be in?}

We assume the system resets every episode of length $\eplen$, which is also known to the learner. This means that at the beginning of each episode, the states of the arms are drawn from an initial distribution. The entire time horizon is denoted by $T$, and for simplicity, we assume it is a multiple of $\eplen$, or $T = \epnum\eplen$. 

\subsection{Bayesian regret and competitor policy}
Let $\bandit \in \bandits$ denote the entire parameters of the system. It includes transition matrices $\atransition$ and $\ptransition$, and an initial distribution of each arm's state. 
%\ambuj{why do we say "might include"? Isn't this everything?} 
The learner only knows the prior distribution of this parameter at the beginning and does not have access to the exact value.

In order to define a regret, we need a \textit{competitor policy}, or a \textit{benchmark}. We first define a class of deterministic policies and policy mappings.

\begin{definition}
\label{def:deterministicPolicy}
A deterministic policy $\pi$ takes time index and history $(t, \history_{t-1})$ as an input and outputs a fixed action $A_t = \pi(t, \history_{t-1})$. 
A deterministic policy mapping $\mu$ takes a system parameter $\bandit$ as an input and outputs a deterministic policy $\pi = \mu(\bandit)$. 
\end{definition}

We fix a deterministic policy mapping $\mu$ and let our algorithm compete against a deterministic policy $\truePolicy = \mu(\trueBandit)$, where $\trueBandit$ represents the true system parameter, which is unknown to the learner. 

We keep our competitor policy abstract mainly because we are in the non-stationary setting. Unlike the classical (stationary) MABs, pulling the same set of arms with the largest expected rewards is not necessarily optimal. Moreover, it is in general PSPACE hard to compute the optimal policy when $\trueBandit$ is given. Regarding these statements, we refer the readers to the book by \citet{gittins1989multi}. As a consequence, researchers have identified conditions that the (efficient) myopic policy is optimal \citep{ahmad2009optimality} or proven that a tractable index-based policy has a reasonable performance against the optimal policy \citep{liu2010indexability}. 

We observe that most of proposed policies including the optimal policy, the myopic policy, or the index-based policy are deterministic. Therefore, researchers can plug in whatever competitor policy of their choice, and our regret bound will apply as long as the chosen policy mapping is deterministic.

Before defining the regret, we introduce a \textit{value function}
\begin{equation}
\label{eq:value}
	V^{\bandit}_{\pi, i}(\history) 
	= \E_{\bandit, \pi} [\sum_{j=i}^{\eplen}A_{j} \cdot X_{j} | \history].
\end{equation}
This is the expected reward of running a policy $\pi$ from round $i$ to $\eplen$ where the system parameter is $\bandit$ and the starting history is $\history$. Note that the benchmark policy $\truePolicy$ obtains $V^{\trueBandit}_{\truePolicy, 1}(\emptyset)$ rewards per episode in expectation. Thus, we can define the regret as 
\begin{equation}
\label{eq:regret}
	R(T;\trueBandit) 
	= \epnum V^{\trueBandit}_{\truePolicy, 1}(\emptyset) - \E_{\trueBandit}\sumtt A_{t} \cdot X_{t}. 
\end{equation}
If an algorithm chooses to fix a policy $\pi_{l}$ for the entire episode $l$, which is the case of our algorithm, then the regret can be written as
\begin{equation*}
	R(T;\trueBandit) 
	= \epnum V^{\trueBandit}_{\truePolicy, 1}(\emptyset) - \E_{\trueBandit}\sumll V^{\trueBandit}_{\pi_{l}, 1}(\emptyset)
	= \E_{\trueBandit}\sumll V^{\trueBandit}_{\truePolicy, 1}(\emptyset) - V^{\trueBandit}_{\pi_{l}, 1}(\emptyset).
\end{equation*}

We particularly focus on the case where $\trueBandit$ is a random and bound the following \textit{Bayesian regret},
\begin{equation*}
	BR(T) = \E_{\trueBandit \sim \prior} R(T;\trueBandit),
\end{equation*}
where $\prior$ is a prior distribution over the set of system parameters $\bandits$. We assume that the prior is known to the learner. We caution our readers that there is at least one other regret definition in the literature, which is called either \textit{frequentist regret} or \textit{worst-case regret}. For this type of regret, one views $\trueBandit$ as a fixed unknown object and directly bounds $R(T;\trueBandit)$. Even though our primary interest is to bound the Bayesian regret, we can establish a connection to the frequentist regret in the special case where the prior $\prior$ has a finite support and the benchmark is the optimal policy (see Corollary \ref{thm:frequentist}). 

\section{Algorithm}
\newcounter{line:history}
\newcounter{line:map}
\begin{algorithm}[t]
	\begin{algorithmic}[1]
	    \STATE \textbf{Input} prior $\prior$, episode length $\eplen$, policy mapping $\mu$
	    \STATE \textbf{Initialize} posterior $\posterior_1 = \prior$, history $\history = \emptyset$
		\FOR {episodes $l = 1, \cdots, \epnum$}
		\STATE Draw a parameter $\bandit_l \sim \posterior_l$ and compute the policy $\pi_l = \mu(\bandit_l)$
%		\STATE Compute the policy $\pi_l = \mu(\bandit_l)$
		\STATE Set $\history_0 = \emptyset$
		\setcounter{line:history}{\value{ALC@line}}
		\FOR {$t = 1, \cdots, \eplen$}
		\STATE Select $N$ active arms $A_t = \pi_l(t, \history_{t-1})$
		\setcounter{line:map}{\value{ALC@line}}
		\STATE Observe rewards $X_{t, A_t}$ and update $\history_t$
		\ENDFOR
		\STATE Append $\history_L$ to $\history$ and update posterior distribution $\posterior_{l+1}$ using $\history$
%		\STATE Update posterior distribution $\posterior_{l+1}$ using $\history$
		\ENDFOR
	\end{algorithmic}
	\caption{Thompson sampling in restless bandits}
	\label{alg:tsrmab}
\end{algorithm}

Our algorithm is an instance of \textit{Thompson sampling} or \textit{posterior sampling}, first proposed by \citet{thompson1933likelihood}. At the beginning of episode $l$, the algorithm draws a system parameter $\bandit_{l}$ from the posterior and plays $\pi_{l} = \mu(\bandit_{l})$ throughout the episode. Once an episode is over, it updates the posterior based on additional observations. Algorithm \ref{alg:tsrmab} describes the steps.  

We want to point out that the history $\history$ fulfills two different purposes. One is to update the posterior $\prior_{l}$, and the other is as an input to a policy $\pi_{l}$. For the latter, however, we do not need the entire history as the arms reset every episode. That is why we set $\history_{0} = \emptyset$ (step \arabic{line:history}) and feed $\history_{t-1}$ to $\pi_{l}$ (step \arabic{line:map}). Furthermore, as we assume that the arms evolve based on Markov chains, the history $\history_{t-1}$ can be summarized as 
\begin{equation}
\label{eq:history}
	(r_1, n_1, \cdots, r_K, n_K), 
\end{equation}
which means that an arm $k$ is played $n_{k}$ rounds ago and $r_{k}$ is the observed reward in that round. If an arm $k$ is never played in the episode, then $n_{k}$ becomes $t$, and $r_{k}$ becomes the expected reward from the initial distribution based on $\bandit_{l}$. As we assume the episode length is fixed to be $\eplen$, there are $L$ possible values for $n_{k}$. Due to the binary reward assumption, $r_{k}$ can take three values including the case where the arm $k$ is never played. From these, we can infer that there are $(3\eplen)^{K}$ possible tuples of $(r_1, n_1, \cdots, r_K, n_K)$. By considering these tuples as states and following the reasoning of \citet{ortner2012regret}, one can view our POMDP as a fully observable MDP. Then one can use the existing algorithms for fully observable MDPs (e.g., \citet{osband2013more}), but the regret bounds easily become vacuous since the number of states depends exponentially on the number of arms $K$. 
Additionally, as we assumed a policy mapping, one might argue to use existing \textit{expert learning} or classical MAB algorithms considering potential policies as experts or arms. 
This is possible, but the number of potential policies corresponds to the size of $\bandits$, which can be very large or even infinite. 
For this reason, existing algorithms are not efficient and/or their regret bounds become too loose.

Due to its generality, it is hard to analyze the time and space complexity of Algorithm \ref{alg:tsrmab}. Two major steps are computing the policy (step 4) and updating posterior (step 10). Computing the policy depends on our choice of competitor mapping $\mu$. If the competitor policy has better performance but is harder to compute, then our regret bound gets more meaningful as the benchmark is stronger, but the running time gets longer. Regarding the posterior update, the computational burden depends on the choice of the prior $\prior$ and its support. If there is a closed-form update, then the step is computationally cheap, but otherwise the burden increases with respect to the size of the support. 

\section{Regret bound}
In this section, we bound the Bayesian regret of Algorithm \ref{alg:tsrmab} by $\tilde{\bigO}(\sqrt{T})$. A key idea in our analysis of Thompson sampling is that the distributions of $\trueBandit$ and $\bandit_{l}$ are identical given the history up to the end of episode $l-1$ (e.g., see \citet[Chp. 36]{lattimore2018bandit}). To state it more formally, let $\sigma(\history)$ be the $\sigma$-algebra generated by the history $\history$. Then we call a random variable $X$ is $\sigma(\history)$\textit{-measurable}, or simply $\history$\textit{-measurable}, if its value is deterministically known given the information $\sigma(\history)$. Similarly, we call a random function $f$ is $\history$-measurable if its mapping is deterministically known given $\sigma(\history)$. We record as a lemma an observation made by \citet{russo2014learning}. 

\begin{lemma}{\bf(Expectation identity)}
\label{lemma:posteriorSampling}
Suppose $\trueBandit$ and $\bandit_{l}$ have the same distribution given $\history$. For any $\history$-measurable function $f$, we have 
\begin{equation*}
	\E[f(\trueBandit)|\history] 
	= \E[f(\bandit_{l})|\history].
\end{equation*}
\end{lemma}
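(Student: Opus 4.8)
The plan is to reduce the statement to the defining property of conditional expectation. The key observation is that ``$f$ is $\history$-measurable'' means precisely that, once $\sigma(\history)$ is revealed, $f$ is no longer random as a function: there is a fixed jointly measurable map $F$ with $f(\cdot) = F(\cdot\,;\history)$, so that evaluating at the two arguments gives $f(\trueBandit) = F(\trueBandit;\history)$ and $f(\bandit_l) = F(\bandit_l;\history)$. Once this is in place, both sides of the claimed identity are integrals of the \emph{same} integrand $F(\cdot\,;\history)$ against, respectively, the conditional law of $\trueBandit$ given $\history$ and the conditional law of $\bandit_l$ given $\history$, which are equal by hypothesis.

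Concretely, first I would take conditional expectations given $\sigma(\history)$ and ``take out what is known'': since $F(\cdot\,;\history)$ is $\sigma(\history)$-measurable as a function-valued object, a disintegration argument yields
\[
\E[\,f(\trueBandit)\mid\history\,] = \int_{\bandits} F(\bandit;\history)\,\mathcal L_{\trueBandit\mid\history}(d\bandit), \qquad \E[\,f(\bandit_l)\mid\history\,] = \int_{\bandits} F(\bandit;\history)\,\mathcal L_{\bandit_l\mid\history}(d\bandit),
\]
where $\mathcal L_{\,\cdot\,\mid\history}$ denotes a regular conditional distribution given $\sigma(\history)$. Then I would invoke the hypothesis, which says exactly $\mathcal L_{\trueBandit\mid\history} = \mathcal L_{\bandit_l\mid\history}$ almost surely, so the two integrals coincide and the proof is complete. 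If one prefers to sidestep regular conditional distributions, the same conclusion follows by a monotone-class argument: verify the identity first for indicators $f(\bandit) = \ind\{\bandit\in B\}$ with $B$ in a generating $\pi$-system (this is immediate from the equality of conditional distributions), extend to simple functions by linearity, and pass to general measurable $f$ by monotone convergence.

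The main obstacle is not conceptual but bookkeeping: making rigorous the step that an $\history$-measurable random function can be pulled outside the conditional expectation and leaves behind only an integral against the conditional law of its argument. This is entirely standard, but it is the one place where measure-theoretic care is needed, which is why I would keep the monotone-class route available as a fallback in case a clean disintegration statement is not among the tools the paper is willing to cite.
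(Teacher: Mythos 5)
Your argument is correct: the paper itself offers no proof of this lemma, merely recording it as an observation of \citet{russo2014learning}, and the disintegration (or monotone-class) argument you give is exactly the standard justification that is being implicitly invoked. The only point worth flagging is that for the final monotone-convergence step you should either restrict to nonnegative or bounded $f$ and then split a general integrable $f$ into positive and negative parts, but since the paper only ever applies the lemma to the bounded value function $V^{\bandit}_{\pi,i}(\emptyset)$, this is a non-issue here.
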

Recall that we assume the competitor mapping $\mu$ is deterministic. Furthermore, the value function $V^{\bandit}_{\pi, i}(\emptyset)$ in \eqref{eq:value} is deterministic given $\bandit$ and $\pi$. 
This implies 
%\begin{equation}
%\label{eq:posteriorIdentity}
$
	\E[V^{\trueBandit}_{\truePolicy, i}(\emptyset)|\history] 
	= \E[V^{\bandit_{l}}_{\pi_{l}, i}(\emptyset)|\history], 
$
%\end{equation}
where $\history$ is the history up to the end of episode $l-1$. This observation leads to the following regret decomposition.

\begin{lemma}{\bf(Regret decomposition)}
\label{lemma:regretDecomp}
The Bayesian regret of Algorithm \ref{alg:tsrmab} can be decomposed as
\begin{align*}
\begin{split}
\label{eq:regretDecomp}
    BR(T) 
    &= \E_{\trueBandit \sim \prior}\sumll \E_{\bandit_l \sim \prior_{l}} [V^{\trueBandit}_{\truePolicy, 1}(\emptyset) - V^{\trueBandit}_{\pi_l, 1}(\emptyset)]
    = \E_{\trueBandit \sim \prior}\sumll \E_{\bandit_l \sim \prior_{l}} [V^{\bandit_l}_{\pi_l, 1}(\emptyset) - V^{\trueBandit}_{\pi_l, 1}(\emptyset)]
    .
\end{split}
\end{align*}
\end{lemma}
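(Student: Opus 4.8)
The plan is to establish the two equalities in Lemma~\ref{lemma:regretDecomp} separately, starting from the definition of the Bayesian regret and using the expectation identity of Lemma~\ref{lemma:posteriorSampling} together with the tower property of conditional expectation.

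First I would rewrite $BR(T)$ by unrolling the definition. By definition $BR(T) = \E_{\trueBandit \sim \prior} R(T;\trueBandit)$, and since Algorithm~\ref{alg:tsrmab} fixes the policy $\pi_l$ throughout episode $l$, the excerpt already records that $R(T;\trueBandit) = \E_{\trueBandit}\sumll [V^{\trueBandit}_{\truePolicy, 1}(\emptyset) - V^{\trueBandit}_{\pi_{l}, 1}(\emptyset)]$. Taking the outer expectation over $\trueBandit \sim \prior$ and noting that the inner expectation ranges over the randomness in the $\pi_l$'s (equivalently the sampled $\bandit_l \sim \posterior_l$ and the observed histories), I get the first displayed expression $\E_{\trueBandit \sim \prior}\sumll \E_{\bandit_l \sim \prior_l}[V^{\trueBandit}_{\truePolicy, 1}(\emptyset) - V^{\trueBandit}_{\pi_l, 1}(\emptyset)]$. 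The main point to be careful about here is bookkeeping of what is random and what is conditioned on: I would condition on $\history$, the history through the end of episode $l-1$, under which $\pi_l = \mu(\bandit_l)$ is drawn and $\truePolicy = \mu(\trueBandit)$ is a fixed (but unknown) object.

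Next I would handle the second equality, which is the substantive one. Fix an episode $l$ and let $\history$ be the history at the end of episode $l-1$. The term $V^{\trueBandit}_{\truePolicy, 1}(\emptyset)$ is a deterministic function of $\trueBandit$ because $\mu$ is deterministic and the value function in~\eqref{eq:value} is deterministic given $\bandit$ and $\pi$; moreover this function does not depend on $\history$ at all, so it is trivially $\history$-measurable as a function. Likewise $\bandit \mapsto V^{\bandit}_{\mu(\bandit), 1}(\emptyset)$ is a fixed deterministic function. Since, by the standard posterior-sampling fact, $\trueBandit$ and $\bandit_l$ have the same conditional distribution given $\history$, Lemma~\ref{lemma:posteriorSampling} applied to $f(\bandit) = V^{\bandit}_{\mu(\bandit), 1}(\emptyset)$ gives $\E[V^{\trueBandit}_{\truePolicy, 1}(\emptyset)\mid \history] = \E[V^{\bandit_l}_{\pi_l, 1}(\emptyset)\mid \history]$. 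Taking expectations over $\history$ and summing over $l$, the $V^{\trueBandit}_{\truePolicy, 1}(\emptyset)$ terms in the first expression may be swapped for $V^{\bandit_l}_{\pi_l, 1}(\emptyset)$ terms, which yields exactly the second expression. The term $V^{\trueBandit}_{\pi_l,1}(\emptyset)$ is left untouched, since there we need the \emph{true} parameter paired with the \emph{sampled} policy and no identity is being invoked.

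The main obstacle I anticipate is not any calculation but the measurability argument: one must verify that conditioning on $\history$ (the history up to the end of episode $l-1$) is exactly the information set under which $\trueBandit$ and $\bandit_l$ are identically distributed, and that the function $f(\bandit) = V^{\bandit}_{\mu(\bandit),1}(\emptyset)$ is genuinely $\history$-measurable in the sense defined in the excerpt — which here is immediate because $f$ is a fixed deterministic function of $\bandit$ alone, not depending on $\history$. I would also want to note explicitly that $\posterior_l$ is itself $\history$-measurable (it is the posterior computed from $\history$), so writing $\E_{\bandit_l \sim \posterior_l}$ is consistent with conditioning on $\history$ and then taking the outer expectation. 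With these points in place, the two equalities follow by linearity of expectation and the tower property, with no further work.
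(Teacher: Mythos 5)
Your proposal is correct and follows essentially the same route as the paper: the first equality is the per-episode rewriting of the regret definition, and the second follows by applying Lemma~\ref{lemma:posteriorSampling} to the $\history$-measurable (indeed, history-independent) function $f(\bandit) = V^{\bandit}_{\mu(\bandit),1}(\emptyset)$ together with the tower rule. Your added care about what is conditioned on and why $f$ is a fixed deterministic function of $\bandit$ alone simply makes explicit what the paper states in the paragraph preceding the lemma.
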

\begin{proof}
The first equality is a simple rewriting of \eqref{eq:regret} because Algorithm \ref{alg:tsrmab} fixes a policy $\pi_{l}$ for the entire episode $l$. 
Then we apply Lemma \ref{lemma:posteriorSampling} along with the tower rule to get
\begin{equation*}
	\E_{\trueBandit \sim \prior}\sumll \E_{\bandit_l \sim \prior_{l}} V^{\trueBandit}_{\truePolicy, 1}(\emptyset)
	= \E_{\trueBandit \sim \prior}\sumll \E_{\bandit_l \sim \prior_{l}} V^{\bandit_{l}}_{\pi_{l}, 1}(\emptyset).
\qedhere
\end{equation*}
\end{proof}

Note that we can compute $V^{\bandit_l}_{\pi_l, 1}(\emptyset)$ as we know $\bandit_{l}$ and $\pi_{l}$. We can also infer the value of $V^{\trueBandit}_{\pi_l, 1}(\emptyset)$ from the algorithm's observations. The main point of Lemma \ref{lemma:regretDecomp} is to rewrite the Bayesian regret using terms that are relatively easy to analyze. 

Next, we define the \textit{Bellman operator}
\begin{equation*}
	\T^\bandit_\pi V(\history_{t-1}) 
    = \E_{\bandit, \pi} [A_t \cdot X_t +  V(\history_t)|\history_{t-1}].
\end{equation*}
It is not hard to check that $V^{\bandit}_{\pi, i} = \T^{\bandit}_{\pi} V^{\bandit}_{\pi, i+1}$. The next lemma further decomposes the regret. 

\begin{lemma}{\bf(Per-episode regret decomposition)}
\label{lemma:DP}
Fix $\trueBandit$ and $\bandit_{l}$, and let $\history_{0}=\emptyset$. Then we have
\begin{align*}
\begin{split}
\label{eq:DPlemma}
    V^{\bandit_l}_{\pi_l, 1}(\history_{0}) 
    &- V^{\trueBandit}_{\pi_l, 1}(\history_{0})
    =\E_{\trueBandit, \pi_{l}}\sumtl(\T^{\bandit_l}_{\pi_l} - \T^{\trueBandit}_{\pi_l})V^{\bandit_l}_{\pi_l, t+1} (\history_{t-1}).
\end{split}
\end{align*}
\end{lemma}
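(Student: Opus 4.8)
The plan is to prove this by a telescoping argument over the $\eplen$ steps within the episode — essentially the performance-difference (or ``simulation'') lemma adapted to our summarized-history setting. The starting point is the Bellman recursion noted just above the statement, $V^{\bandit}_{\pi, i} = \T^{\bandit}_{\pi} V^{\bandit}_{\pi, i+1}$, together with the terminal condition $V^{\bandit}_{\pi, \eplen+1} \equiv 0$ (the sum in \eqref{eq:value} is empty at $i = \eplen + 1$). Applying the recursion at $i = 1$ to both $\bandit_l$ and $\trueBandit$ gives $V^{\bandit_l}_{\pi_l, 1}(\history_0) - V^{\trueBandit}_{\pi_l, 1}(\history_0) = \T^{\bandit_l}_{\pi_l} V^{\bandit_l}_{\pi_l, 2}(\history_0) - \T^{\trueBandit}_{\pi_l} V^{\trueBandit}_{\pi_l, 2}(\history_0)$.

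The key manipulation is to add and subtract the cross term $\T^{\trueBandit}_{\pi_l} V^{\bandit_l}_{\pi_l, 2}(\history_0)$, which splits the right-hand side into $(\T^{\bandit_l}_{\pi_l} - \T^{\trueBandit}_{\pi_l}) V^{\bandit_l}_{\pi_l, 2}(\history_0)$ plus $\T^{\trueBandit}_{\pi_l} V^{\bandit_l}_{\pi_l, 2}(\history_0) - \T^{\trueBandit}_{\pi_l} V^{\trueBandit}_{\pi_l, 2}(\history_0)$. Here I would use that the Bellman operator is affine: when both applications use the same model $\trueBandit$, the immediate-reward contribution $\E_{\trueBandit, \pi_l}[A_1 \cdot X_1 \mid \history_0]$ is identical and cancels, so the second group collapses to $\E_{\trueBandit, \pi_l}[(V^{\bandit_l}_{\pi_l, 2} - V^{\trueBandit}_{\pi_l, 2})(\history_1) \mid \history_0]$ — a difference of value functions one step later, evaluated under the true dynamics. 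Recursing on this term with the same Bellman identity at $i = 2, 3, \dots$, and using the tower rule to fold the nested conditional expectations over $\history_1, \history_2, \dots$ into a single $\E_{\trueBandit, \pi_l}$, the value-difference term is pushed out to step $\eplen + 1$, where it vanishes by the terminal condition. What remains is exactly $\sum_{t=1}^{\eplen} \E_{\trueBandit, \pi_l}[(\T^{\bandit_l}_{\pi_l} - \T^{\trueBandit}_{\pi_l}) V^{\bandit_l}_{\pi_l, t+1}(\history_{t-1})]$, and since $\history_0 = \emptyset$ is deterministic the outer conditioning is vacuous, giving the claimed identity.

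I expect the only delicate point — more a bookkeeping matter than a genuine obstacle — to concern the affine nature of $\T^{\bandit}_{\pi}$ and which model governs the trajectory law in each expectation. The Bellman operator carries a reward term that itself depends on the model (through the state distribution at time $t$), so ``$(\T^{\bandit_l}_{\pi_l} - \T^{\trueBandit}_{\pi_l}) V$'' must be read as the difference of two full affine operators applied to the same function $V^{\bandit_l}_{\pi_l, t+1}$, not as a linear operator acting on a difference; the cancellation of reward terms occurs only in the cross term, where the model is held fixed at $\trueBandit$. Keeping the conditioning history $\history_{t-1}$ consistent with the rollout distribution $\E_{\trueBandit, \pi_l}$ at every step and invoking the terminal condition cleanly are all that is needed, and a short finite induction on the remaining horizon $\eplen - t + 1$ makes the telescoping rigorous.
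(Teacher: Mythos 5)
Your proposal is correct and follows essentially the same route as the paper's proof: the same add-and-subtract of the cross term $\T^{\trueBandit}_{\pi_l} V^{\bandit_l}_{\pi_l, 2}$, the same collapse of the second group into $\E_{\trueBandit, \pi_l}[(V^{\bandit_l}_{\pi_l, 2} - V^{\trueBandit}_{\pi_l, 2})(\history_1)\mid\history_0]$, and the same $\eplen$-fold recursion with the tower rule. Your extra remarks on the terminal condition $V^{\bandit}_{\pi,\eplen+1}\equiv 0$ and on reading $(\T^{\bandit_l}_{\pi_l}-\T^{\trueBandit}_{\pi_l})V$ as a difference of affine operators only make explicit what the paper leaves implicit.
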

\begin{proof}
Using the relation $V^{\bandit}_{\pi, i} = \T^{\bandit}_{\pi} V^{\bandit}_{\pi, i+1}$, we may write
\begin{align*}
    V^{\bandit_l}_{\pi_l, 1}(\history_0) - V^{\trueBandit}_{\pi_l, 1}(\history_0) 
    &=(\T^{\bandit_l}_{\pi_l} V^{\bandit_l}_{\pi_l, 2}-\T^{\trueBandit}_{\pi_l} V^{\trueBandit}_{\pi_l, 2})(\history_0)\\
    &=(\T^{\bandit_l}_{\pi_l} - \T^{\trueBandit}_{\pi_l})V^{\bandit_l}_{\pi_l, 2}(\history_0) 
    + \T^{\trueBandit}_{\pi_l}(V^{\bandit_l}_{\pi_l, 2} - V^{\trueBandit}_{\pi_l, 2})(\history_0).
%    &=(\T^{\bandit_l}_{\pi_l} - \T^{\trueBandit}_{\pi_l})V^{\bandit_l}_{\pi_l, 2}(\history_0) 
%    + \E_{\trueBandit, \pi_l}[(V^{\bandit_l}_{\pi_l, 2} - V^{\trueBandit}_{\pi_l, 2})(\history_1)|\history_0].
\end{align*}
The second term can be written as 
%\begin{equation*}
$\E_{\trueBandit, \pi_l}[(V^{\bandit_l}_{\pi_l, 2} - V^{\trueBandit}_{\pi_l, 2})(\history_1)|\history_0]$,
%\end{equation*}
and we can repeat this $\eplen$ times to obtain the desired equation. 
\end{proof}

Now we are ready to prove our main theorem. A complete proof can be found in Appendix \ref{section:mainProof}.

\begin{theorem}{\bf(Bayesian regret bound of the Thompson sampling)}
\label{thm:main}
The Bayesian regret of Algorithm \ref{alg:tsrmab} satisfies the following bound
\begin{equation*}
    BR(T) 
    = \bigO (\sqrt{K\eplen^3 N^{3}T \log T})
    = \bigO (\sqrt{mK\eplen^4 N^{3} \log (m\eplen)}).
\end{equation*}
\end{theorem}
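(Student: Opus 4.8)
The plan is to continue the decomposition already begun: combining Lemmas~\ref{lemma:regretDecomp} and~\ref{lemma:DP} reduces the Bayesian regret to
\[
  BR(T)=\E_{\trueBandit\sim\prior}\sumll\E_{\bandit_l\sim\posterior_l}\,\E_{\trueBandit,\pi_l}\sumtl\bigl(\T^{\bandit_l}_{\pi_l}-\T^{\trueBandit}_{\pi_l}\bigr)V^{\bandit_l}_{\pi_l,t+1}(\history_{t-1}),
\]
so it suffices to bound the per-step Bellman difference. Because $\pi_l$ is deterministic, $A_t$ is fixed given $\history_{t-1}$ and $\history_t$ is a deterministic function of the observed active rewards $X_{t,A_t}\in\{0,1\}^N$; hence that difference is the gap between $\E[A_t\cdot X_t+V^{\bandit_l}_{\pi_l,t+1}(\history_t)]$ under the two product-Bernoulli laws of $X_{t,A_t}$ induced by $\bandit_l$ and $\trueBandit$. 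Using $A_t\cdot X_t+V^{\bandit_l}_{\pi_l,t+1}(\history_t)\in[0,N\eplen]$ and the fact that total-variation distance between product measures is at most the sum of the marginal ones (each equal to $\lvert p-q\rvert$ for Bernoullis), I would bound it by $N\eplen\sum_{k:A_{t,k}=1}\lvert p^{\bandit_l}_k(\history_{t-1})-p^{\trueBandit}_k(\history_{t-1})\rvert$, where $p^{\bandit}_k(\history_{t-1})$ is the probability that pulling arm $k$ next returns $1$. The key structural observation, already noted in the paper, is that $p^{\bandit}_k(\history_{t-1})$ depends on $\history_{t-1}$ only through the pair $(r_k,n_k)$ --- one of $\bigO(\eplen)$ possible ``contexts'' per arm --- and is, for each context, a fixed function of $\bandit$ obtained by composing $\atransition_k$ with a power of $\ptransition_k$ (or $\initDist_k$ with a power of $\ptransition_k$ for a never-pulled arm).

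Next I would run a confidence-set argument. Given $\trueBandit$, every pull of arm $k$ in context $c$ yields an independent $\mathrm{Bernoulli}(p^{\trueBandit}_k(c))$ observation, so Hoeffding together with a union bound over the $\le K$ arms, the $\bigO(\eplen)$ contexts, and the $\epnum$ episodes gives a high-probability event $G$ on which $\lvert p^{\trueBandit}_k(c)-\hat p_{l,k}(c)\rvert\le\beta_l(k,c):=\sqrt{c_0\log(K\eplen\epnum/\delta)/\max(1,N_l(k,c))}$ for all $l,k,c$, where $N_l(k,c)$ counts the pulls of $k$ in context $c$ before episode $l$ and $\hat p_{l,k}(c)$ is the corresponding empirical frequency. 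Define the $\history$-measurable set $\bandits_l=\{\bandit:\lvert p^{\bandit}_k(c)-\hat p_{l,k}(c)\rvert\le\beta_l(k,c)\ \text{for all }k,c\}$, so $\trueBandit\in\bandits_l$ on $G$; then I would apply Lemma~\ref{lemma:posteriorSampling} a second time, to the $\history$-measurable indicator $\bandit\mapsto\ind(\bandit\notin\bandits_l)$, to get $\prob(\bandit_l\notin\bandits_l\mid\history)=\prob(\trueBandit\notin\bandits_l\mid\history)\le\delta$. Splitting the regret sum over $\{\trueBandit,\bandit_l\in\bandits_l\}$ and its complement, the complement contributes $\bigO(\epnum\delta N^2\eplen^2)$, which is negligible once $\delta$ is taken polynomially small in $T$ (and this is what introduces the $\log T$), while on the good part $\lvert p^{\bandit_l}_k(c)-p^{\trueBandit}_k(c)\rvert\le2\beta_l(k,c)$.

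It then remains to control $\E\bigl[\sumll\sumtl\sum_{k:A_{t,k}=1}\beta_l(k,c_{t,k})\bigr]$, i.e.\ a sum of $\tilde{\bigO}\bigl(1/\sqrt{N_l(k,c)}\bigr)$ over the $NT$ active arm-rounds. Grouping by $(k,c)$ and writing $j^{(k,c)}_l$ for the number of pulls of $k$ in context $c$ during episode $l$ (so $\sum_{k,c,l}j^{(k,c)}_l=NT$), the episode-level sum $\sum_l j^{(k,c)}_l/\sqrt{\max(1,\sum_{l'<l}j^{(k,c)}_{l'})}$ telescopes --- with $j^{(k,c)}_l\le\eplen$ used only to absorb the first few episodes --- to $\bigO\bigl(\sqrt{\sum_l j^{(k,c)}_l}+\eplen^{3/2}\bigr)$; Cauchy--Schwarz over the $\bigO(K\eplen)$ contexts together with $\sum_{k,c,l}j^{(k,c)}_l=NT$ then gives $\bigO(\sqrt{K\eplen NT}+K\eplen^{5/2})$. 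Multiplying by the $N\eplen\sqrt{\log T}$ prefactor and dropping the lower-order $NK\eplen^{7/2}\sqrt{\log T}$ term yields $BR(T)=\bigO(\sqrt{K\eplen^3N^3T\log T})$, and substituting $T=\epnum\eplen$ gives the second form.

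I expect the main obstacles to be (i) identifying $(r_k,n_k)$ as a sufficient statistic so that the per-context observation process is genuinely i.i.d.\ Bernoulli --- this is exactly what lets the POMDP be analyzed through $\bigO(K\eplen)$ scalar parameters rather than the exponentially many states of the naive MDP reduction --- and (ii) the counting lemma: because the sampled parameter $\bandit_l$, and hence the policy, is frozen for a whole episode, a context may be visited up to $\eplen$ times while $N_l(k,c)$ stands still, and one must check this costs only the additive $\eplen^{3/2}$ term rather than an extra $\sqrt{\eplen}$ factor inside the square root. The pointwise Bellman bound and both invocations of the expectation identity should be routine once the context structure is in place.
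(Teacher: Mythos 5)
Your proposal is correct and follows essentially the same route as the paper's proof: the same $(r_k,n_k)$ context reduction, Hoeffding confidence sets with the posterior-sampling identity applied both to the value functions and to the indicator $\ind(\bandit\notin\bandits_l)$, a product-Bernoulli total-variation bound on the Bellman gap (the paper's Lemma \ref{lemma:technical} is exactly the subadditivity of TV you invoke), and the same pigeonhole/Cauchy--Schwarz counting argument with an absorbed lower-order additive term. The only differences are in constants and the precise exponent of that negligible additive term.
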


\begin{remark}
	If the system is the classical stationary MAB, then it corresponds to the case $\eplen = 1, N=1$, and our result reproduces the result of $\bigO(\sqrt{KT\log T})$ \citep[Chp. 36]{lattimore2018bandit}. 
	This suggests our bound is optimal in $K$ and $T$ up to a logarithmic factor. 
	Further, when $N > \frac{K}{2}$, we can think of the problem as choosing the passive arms, and the smaller bound with $N$ replaced by $K - N$ would apply. 
	When $\eplen=1$, the problem becomes combinatorial bandits of choosing $N$ active arms out of $K$. \citet{cesa2012combinatorial} propose an algorithm with a regret bound $\bigO(\sqrt{KNT\log K})$ with an assumption that the loss is always bounded by $1$. Since our reward can be as big as $N$, our bound has the same dependence on $N$ with theirs, suggesting tight dependence of our bound on $N$.
\end{remark}

\begin{proof}[Proof Sketch]
We fix an episode $l$ and analyze the regret in this episode. Let $t_{l} = (l-1)\eplen$ so that the episode starts at time $t_{l} + 1$. Define
%\begin{equation*}
$N_l(k, r, n) = \sum_{t=1}^{t_l} \ind\{A_{t, k} = 1, r_k = r, n_k = n\}$,
%\end{equation*}
which counts the number of rounds where the arm $k$ was chosen by the learner with history $r_{k} = r$ and $n_{k} = n$ (see \eqref{eq:history} for definition). Note that 
%\begin{equation*}
$k \in [K], r \in \{0, 1, \initDist(k)\},\text{ and } n \in [\eplen]$,
%\end{equation*}
where $\initDist(k)$ is the initial success rate of the arm $k$. This implies there are $3KL$ tuples of $(k, r, n)$. 

Let $\omega^\bandit(k, r, n)$ denote the conditional probability of $X_{k} = 1$ given a history $(r, n)$ and a system parameter $\bandit$. Also let $\hat{\omega}(k, r, n)$ denote the empirical mean of this quantity (using $N_{l}(k, r, n)$ past observations and set the estimate to $0$ if $N_{l}(k, r, n)=0$). Then define
\begin{equation*}
    \bandits_l = \left\{\bandit ~|~ \forall (k, r, n),~~
    |(\hat{\omega} - \omega^\bandit)(k, r, n)|  
    < \sqrt{\frac{2 \log(1/\delta)}{1\vee N_{l}(k, r, n)}} \right\}.
\end{equation*}
Since $\hat{\omega}(k, r, n)$ is $\history_{t_{l}}$-measurable, so is the set $\bandits_{l}$. Using the Hoeffding inequality, one can show $\prob(\trueBandit \notin \bandits_l) = \prob(\bandit_l \notin \bandits_l) \leq 3\delta K\eplen$. In other words, we can claim that with high probability, $|\omega^{\bandit_{l}}(k, r, n) - \omega^{\trueBandit}(k, r, n)|$ is small for all $(k, r, n)$. 

We now turn our attention to the following Bellman operator
\begin{align*}
    \T^{\bandit}_{\pi_l}&V^{\bandit_l}_{\pi_l, t} (\history_{t-1})
    = \E_{\bandit, \pi_l} [A_{t_l + t} \cdot X_{t_l + t} +  V^{\bandit_l}_{\pi_l, t}(\history_{t})|\history_{t-1}].
\end{align*}
Since $\pi_{l}$ is a deterministic policy, $A_{t_{l}+t}$ is also deterministic given $\history_{t-1}$ and $\pi_{l}$. Let $(k_1, \dots, k_N)$ be the active arms at time $t_l + t$ and write $\omega^{\bandit}(k_i, r_{k_i}, n_{k_i}) = \omega_{\bandit, i}$. Then we can rewrite
\begin{align*}
\begin{split}
\label{eq:main1}
    \T^{\bandit}_{\pi_l}&V^{\bandit_l}_{\pi_l, t} (\history_{t-1})
    = \sumin \omega_{\bandit, i} + \sum_{x \in \{0, 1\}^N} P^{\bandit}_{x} V^{\bandit_l}_{\pi_l, t}(\history_{t-1} \cup (A_{t_l + t}, x)),
\end{split}
\end{align*}
where $P^{\bandit}_{x} = \prod_{i=1}^N \omega_{\bandit, i}^{x_i} (1-\omega_{\bandit, i})^{1 - x_i}$. 
Under the event that $\trueBandit, \bandit_l \in \bandits_l$, we have 
\begin{equation*}
\label{eq:main6}
    |\omega_{\bandit_{l}, i} - \omega_{\trueBandit, i}| 
    < 1 \wedge \sqrt{\frac{8 \log(1/\delta)}{1\vee N_{l}(k_i, r_{k_i}, n_{k_i})}} =: \Delta_i(t_{l}+t),
\end{equation*}
where the dependence on $t_{l}+t$ comes from the mapping from $i$ to $k_{i}$. When $\omega_{\bandit_{l}, i}$ and $\omega_{\trueBandit, i}$ are close for all $(k, r, n)$, we can actually bound the difference between the following Bellman operators as
\begin{align*}
	|(\T^{\trueBandit}_{\pi_l} - \T^{\bandit_{l}}_{\pi_l})V^{\bandit_l}_{\pi_l, t} (\history_{t-1})|
	\le 3\eplen N \sumin \Delta_{i}(t_{l}+t).
\end{align*}
Then by applying Lemma \ref{lemma:DP}, we get
%\begin{equation*}
$|V^{\bandit_l}_{\pi_l, 1}(\emptyset) - V^{\trueBandit}_{\pi_l, 1}(\emptyset)|
	\le 3\eplen N \E_{\trueBandit, \pi_{l}}\sumtl \sumin \Delta_{i}(t_{l}+t)$,
%\end{equation*}
which holds whenever $\trueBandit, \bandit_{l} \in \bandits_{l}$. When $\trueBandit \notin \bandits_{l}$ or $\bandit_{l} \notin \bandits_{l}$, which happens with probability less than $6\delta K \eplen$, we have a trivial bound $|V^{\bandit_l}_{\pi_l, 1}(\emptyset) - V^{\trueBandit}_{\pi_l, 1}(\emptyset)| \le \eplen N$. We can deduce 
\begin{align*}
	|V^{\bandit_l}_{\pi_l, 1}(\emptyset) &- V^{\trueBandit}_{\pi_l, 1}(\emptyset)| 
	\le 3\eplen N \ind(\trueBandit, \bandit_{l} \in \bandits_{l})\E_{\trueBandit, \pi_{l}}\sumtl\sumin \Delta_{i}(t_{l}+t) + 6\delta K\eplen^{2}N.
\end{align*}

Combining this with Lemma \ref{lemma:regretDecomp}, we can show 
\begin{align}
\begin{split}
\label{eq:main3}
	BR&(T) 
	\le  6\delta \epnum K \eplen^{2} N
	+ \E_{\trueBandit \sim Q} 3\eplen N \sumll \ind(\trueBandit, \bandit_{l} \in \bandits_{l})\E_{\trueBandit, \pi_{l}} \sumtl\sumin \Delta_{i}(t_{l}+t).
\end{split}
\end{align}

After some algebra, bounding sums of finite differences by integrals, and applying the Cauchy-Schwartz inequality, we can bound the second summation by 
\begin{equation}
\label{eq:main4}
	18K\eplen^{3}N + 24\sqrt{3K \eplen^{3}N^{3}T \log(1/\delta)}.
\end{equation}

Combining \eqref{eq:main3}, \eqref{eq:main4}, and our assumption that $T = \epnum\eplen$, we obtain 
\begin{equation*}
	BR(T) = \bigO(\delta K\eplen N T + K\eplen^{3}N + \sqrt{K\eplen^{3}N^{3}T\log(1/\delta)}). 
\end{equation*}
Since $NT$ is a trivial upper bound of $BR(T)$, we may ignore the $K\eplen^{3}N$ term. Setting $\delta = \frac{1}{T}$ completes the proof. 
\end{proof}

As discussed in Section \ref{section:setting}, researchers sometimes pay more attention to the case where the true parameter $\trueBandit$ is deterministically fixed in advance, in which the frequentist regret becomes more relevant. It is not easy to directly extend our analysis to the frequentist regret in general, but we can achieve a meaningful bound with extra assumptions. Suppose our prior $Q$ is discrete and the competitor is the optimal policy. Then we know $R(T;\trueBandit)$ is always non-negative due to the optimality of the benchmark and can deduce $qR(T;\trueBandit) \le BR(T)$, where $q$ is the probability mass on $\trueBandit$. This leads to the following corollary.

%At least in the case of discrete prior $\prior$, it is easy to check that $qR(T;\trueBandit) \le BR(T)$ 
%\ambuj{this required the regrets to be non-negative. how do we know that?}, where $q$ is the probability mass on $\trueBandit$. This leads to the following corollary. 
\begin{corollary}{\bf(Frequentist regret bound of Thompson sampling)}
\label{thm:frequentist}
Suppose the prior $\prior$ is discrete and puts a non-zero mass on the parameter $\trueBandit$. Additionally, assume that the competitor policy is the optimal policy. Then Algorithm \ref{alg:tsrmab} satisfies the following bound
\[
R(T;\trueBandit)
= \bigO (\sqrt{K\eplen^3 N^{3}T \log T})
= \bigO (\sqrt{mK\eplen^4 N^{3} \log (m\eplen)}).
\]
\end{corollary}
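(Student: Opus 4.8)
The plan is to exploit the observation flagged just before the statement: when the benchmark mapping $\mu$ returns the optimal policy, the realized regret $R(T;\bandit)$ is non-negative for \emph{every} parameter $\bandit$ in the support of $\prior$, so lower-bounding a single term of the Bayesian regret sum transfers the bound of Theorem~\ref{thm:main} directly to that term. Concretely, I would show $R(T;\trueBandit) \le BR(T)/q$ where $q$ is the prior mass at $\trueBandit$, and then invoke Theorem~\ref{thm:main}.

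First I would establish non-negativity. Fix $\bandit$ and recall from Section~\ref{section:setting} that, since Algorithm~\ref{alg:tsrmab} commits to a deterministic policy $\pi_l$ throughout episode $l$,
\[
	R(T;\bandit) = \E_{\bandit}\sumll \big( V^{\bandit}_{\mu(\bandit), 1}(\emptyset) - V^{\bandit}_{\pi_l, 1}(\emptyset) \big).
\]
By hypothesis $\mu(\bandit)$ is an optimal policy for the finite-horizon problem with parameter $\bandit$, so its value function dominates that of any policy; in particular $V^{\bandit}_{\pi_l, 1}(\emptyset) \le V^{\bandit}_{\mu(\bandit), 1}(\emptyset)$ for every realization of the randomly drawn $\pi_l$, and taking expectations gives $R(T;\bandit) \ge 0$.

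Next I would set $q = \prior(\{\trueBandit\})$, which is strictly positive by the discreteness assumption, and use non-negativity of every summand in the (countable) expectation to write
\[
	BR(T) = \sum_{\bandit} \prior(\{\bandit\})\, R(T;\bandit) \ge q\, R(T;\trueBandit),
\]
so that $R(T;\trueBandit) \le BR(T)/q$. Substituting the bound from Theorem~\ref{thm:main} and absorbing the constant $1/q$ (which depends on the prior but not on $T$, $K$, $N$, $\eplen$) into the $\bigO$ gives the first claimed bound; the second follows from $T = \epnum\eplen$ exactly as in Theorem~\ref{thm:main}.

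I expect the only real obstacle to be the non-negativity step, and it is mild: it requires $\mu(\bandit)$ to be optimal against \emph{all} (history-dependent, possibly randomized) policies — including the posterior-driven policy the algorithm executes — which holds because a finite-horizon POMDP admits a deterministic policy attaining the optimal value, so the dominance inequality is valid realization by realization. I would also state plainly that the hidden constant scales like $1/q$, so the bound is not uniform over priors and deteriorates as the prior mass on $\trueBandit$ shrinks; this is the unavoidable price of converting a Bayesian bound into a frequentist one by this averaging-out argument, and it is the reason the extra assumptions (discrete prior, optimal-policy benchmark) are needed.
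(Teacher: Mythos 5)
Your proposal is correct and follows exactly the paper's own argument: non-negativity of $R(T;\bandit)$ for every $\bandit$ in the support (by optimality of the benchmark), hence $q\,R(T;\trueBandit) \le BR(T)$, followed by an application of Theorem~\ref{thm:main}. Your explicit remark that the hidden constant scales like $1/q$ is a worthwhile clarification the paper leaves implicit.
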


\section{Experiments}
\label{sec:exp}
%We empirically examine our model in a practical case. 
%
%\subsection{Model design}
We empirically investigate the Gilbert-Elliott channel model, which is studied by \citet{liu2010indexability} in a restless bandit perspective\footnote{Our code is available at \url{https://github.com/yhjung88/ThompsonSamplinginRestlessBandits}}. This model can be broadly used in communication systems such as cognitive radio networks, downlink scheduling in cellular systems, opportunistic transmission over fading channels, and resource-constrained jamming and anti-jamming. 

Each arm $k$ has two parameters $p^{k}_{01}$ and $p^{k}_{11}$, which determine the transition matrix. We assume $\atransition = \ptransition$ and each arm's transition matrix is independent on the learner's action. There are only two states, \textit{good} and \textit{bad}, and the reward of playing an arm is $1$ if its state is good and $0$ otherwise. Figure \ref{fig:GEmodel} summarizes this model. We assume the initial distribution of an arm $k$ follows the stationary distribution. In other words, its initial state is good with probability $\omega_{k} = \frac{p^{k}_{01}}{p^{k}_{01} + 1 - p^{k}_{11}}$. 

\begin{figure}[h]
\centering
\includegraphics[width=0.6\linewidth]{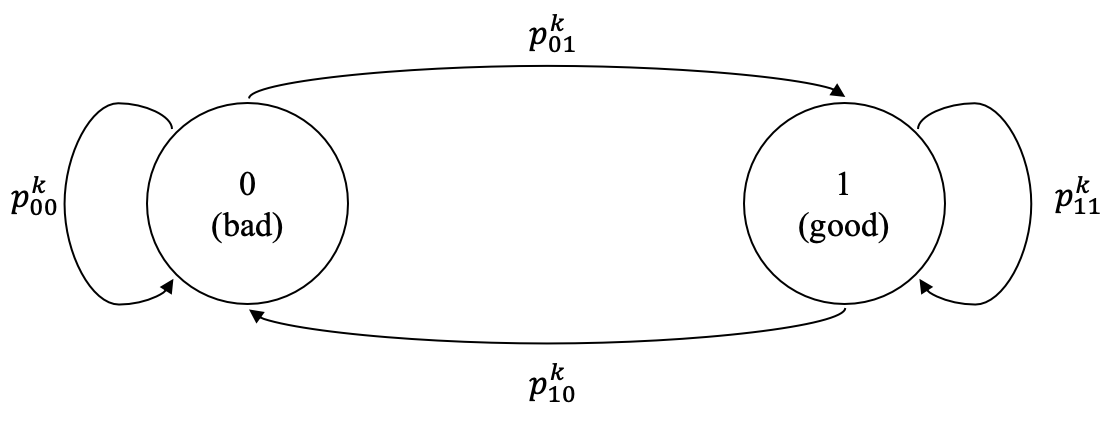}
\caption{The Gilbert-Elliott channel model}
\label{fig:GEmodel}
\end{figure}

We fix $\eplen = 50$ and $\epnum = 30$. We use Monte Carlo simulation with size $100$ or greater to approximate expectations. As each arm has two parameters, there are $2K$ parameters. For these, we set the prior distribution to be uniform over a finite support $\{0.1, 0.2, \cdots, 0.9\}$. 

\subsection{Competitors}
As mentioned earlier, one important strength of our result is that various policy mappings can be used as benchmarks. Here we test three different policies: the best fixed arm policy, the myopic policy, and the Whittle index policy. We want to emphasize again that these competitor policies know the system parameters while our algorithm does not. 

The best fixed arm policy computes the stationary distribution $\omega_{k} = \frac{p^{k}_{01}}{p^{k}_{01} + 1 - p^{k}_{11}}$ for all $k$ and pulls the arms with top $N$ values. The myopic policy keeps updating the belief $\omega_{k}(t)$ for the arm $k$ being in a good state and pulls the top $N$ arms. Finally, the Whittle index policy computes the Whittle index of each arm and uses it to rank the arms. The Whittle index is proposed by \citet{whittle1988restless}, and \citet{liu2010indexability} find a closed-form formula to compute the Whittle index  in this particular setting. The Whittle index policy is very popular in optimization literature as it decouples the optimization process into $K$ independent problems for each arm, which significantly reduces the computational complexity while maintaining a reasonable performance against the optimal policy.
%\ambuj{perhaps say just one sentence about why Whittle index is popular. Most ML people have probably never heard about it}.

One observation is that these three policies are reduced to the best fixed arm policy in the stationary case. However, the first two policies are known to be sub-optimal in general \citep{gittins1989multi}. \citet{liu2010indexability} justify both theoretically and empirically the performance of the Whittle index policy for the Gilbert-Elliott channel model.

\subsection{Results}
We first analyze the Bayesian regret. For this, we use $K=8$ and $N=3$. The value functions $V^{\bandit}_{\pi, 1}(\emptyset)$ of the best fixed arm policy, the myopic policy, and the Whittle index policy are $105.4, 110.3,$ and $111.4$, respectively. If a competitor policy has a weak performance, then Thompson sampling also uses this weak policy mapping to get a policy $\pi_{l}$ for the episode $l$. This implies that the regret does not necessarily become negative when the benchmark policy is weak. Figure \ref{fig:regret} shows the trend of the Bayesian regret as a function of episode indices. Regardless of the choice of policy mapping, the regret is sub-linear, and the slope of $\log$-$\log$ plot is less than $\frac{1}{2}$, which agrees with Theorem \ref{thm:main}. 

\begin{figure}[h]
\centering
\includegraphics[width=0.75\linewidth]{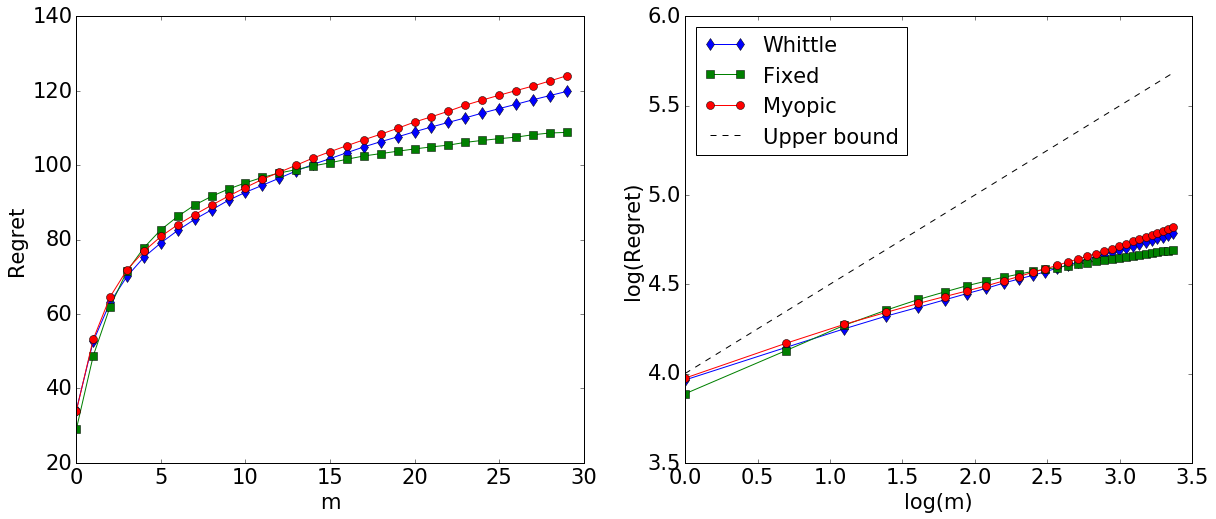}
\caption{Bayesian regret of Thompson sampling versus episode (left) and its $\log$-$\log$ plot (right)}
\label{fig:regret}
\end{figure}

Next we fix true parameters and investigate the model's behavior more closely. For this, we choose $K=4$, $N=2$, and $\{(p^{k}_{01}, p^{k}_{11})\}_{k=1, 2, 3, 4} = \{(0.3, 0.7), (0.4, 0.6), (0.5, 0.5), (0.6, 0.4)\}$. This choice results in $\omega_{k} = 0.5$ for all $k$, and the best fixed arm policy becomes indifferent. Therefore achieving zero regret against the best fixed arm becomes trivial. We use the same uniform prior as the previous experiment. Figure \ref{fig:fixedParams} presents the trend of value functions and how Thompson sampling puts more posterior weights on the correct parameters as it proceeds. Three horizontal lines in the left figure represent the values of the competitor policies. The values of the best fixed arm policy, the myopic policy, and the Whittle index policy are $50.2, 54.6,$ and $55.6$, respectively. It is a good example why one should not pull the same arms all the time in restless bandits. The value function of Thompson sampling successfully converges to the competitor value for every benchmark while the one with the myopic policy needs more episodes to fully converge. This supports Corollary \ref{thm:frequentist} in that our model can be used even in the non-Bayesian setting as far as the prior has a non-zero weight on the true parameters. Also, the posterior weights on the correct parameters monotonically increase (Figure \ref{fig:fixedParams}, right), which again confirms our model's performance. We measure these weights when the competitor map is the Whittle index policy.

\begin{figure}[h]
\centering
\includegraphics[width=0.75\linewidth]{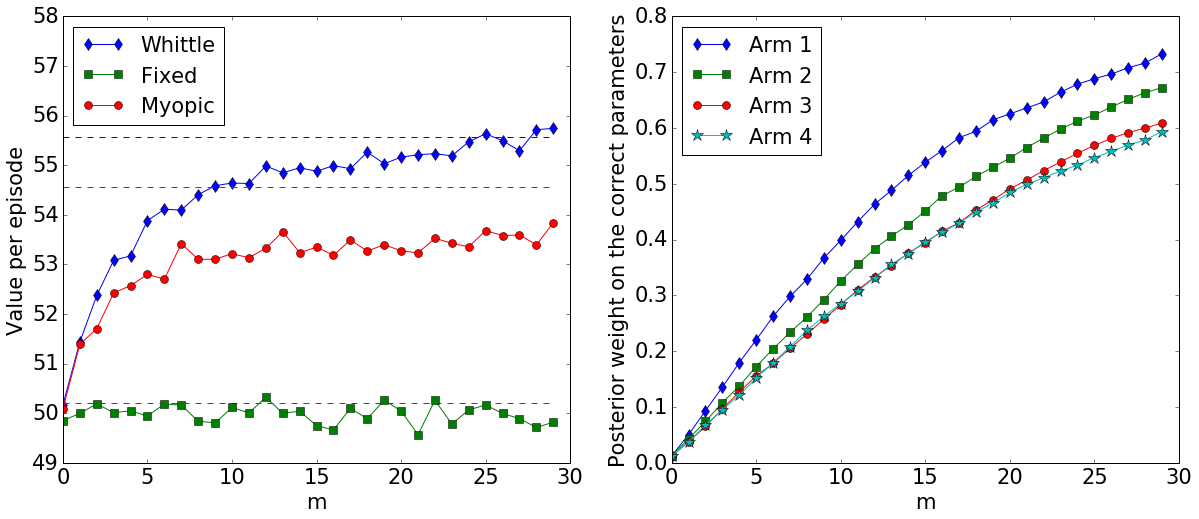}
\caption{Average per-episode value versus episode and the benchmark values (left); the posterior weights of the correct parameters versus episode in the case of the Whittle index policy (right)}
\label{fig:fixedParams}
\end{figure}

\section{Discussion and future directions}
In this paper, we have analyzed Thompson sampling in restless bandits with binary rewards. The Bayesian regret can be theoretically bounded as $\tilde{\bigO}(\sqrt T)$, which naturally extends the results in the stationary MAB. One primary strength of our analysis is that the bound applies to arbitrary deterministic competitor policy mappings, which include the optimal policy and many other practical policies. Experiments with the simulated Gilbert-Elliott channel models support the theoretical results. In the special case where the prior has a discrete support and the benchmark is the optimal policy, our result extends to the frequentist regret, which is also supported by empirical results. 

There are at least two interesting directions to be explored. 
\begin{enumerate}[leftmargin=*]
\item Our setting is episodic with known length $\eplen$. The system resets periodically, which makes the analysis of the regret simpler. However, it is sometimes unrealistic to assume this periodic reset (e.g., online recommendation system studied by \citet{meshram2017restless}). Analyzing a learning algorithm in the non-episodic setting will be useful. 
\item Corollary \ref{thm:frequentist} is not directly applicable in the case of continuous prior. In stationary MABs, it has been shown that Thompson sampling enjoys the frequentist regret bound of $\tilde{\bigO}(\sqrt T)$ with additional assumptions \citep[Chp. 36]{lattimore2018bandit}. Extending this to the restless bandit setting will be an interesting problem. 
\end{enumerate}

\subsubsection*{Acknowledgments}
We acknowledge the support of NSF CAREER grant IIS-1452099. AT was also supported by a Sloan Research Fellowship. AT visited Criteo AI Lab, Paris and had discussions with Criteo researchers -- Marc Abeille, Cl\'ement Calauz\`enes, and J\'er\'emie Mary -- regarding non-stationarity in bandit problems. These discussions were very helpful in attracting our attention to the regret analysis of restless bandit problems and the need for considering a variety of benchmark competitors when defining regret.

\bibliography{ts_rmab}

\begin{thebibliography}{20}
\providecommand{\natexlab}[1]{#1}
\providecommand{\url}[1]{\texttt{#1}}
\expandafter\ifx\csname urlstyle\endcsname\relax
  \providecommand{\doi}[1]{doi: #1}\else
  \providecommand{\doi}{doi: \begingroup \urlstyle{rm}\Url}\fi

\bibitem[Ahmad et~al.(2009)Ahmad, Liu, Javidi, Zhao, and
  Krishnamachari]{ahmad2009optimality}
Sahand Haji~Ali Ahmad, Mingyan Liu, Tara Javidi, Qing Zhao, and Bhaskar
  Krishnamachari.
\newblock Optimality of myopic sensing in multichannel opportunistic access.
\newblock \emph{IEEE Transactions on Information Theory}, 55\penalty0
  (9):\penalty0 4040--4050, 2009.

\bibitem[Cesa-Bianchi and Lugosi(2012)]{cesa2012combinatorial}
Nicolo Cesa-Bianchi and G{\'a}bor Lugosi.
\newblock Combinatorial bandits.
\newblock \emph{Journal of Computer and System Sciences}, 78\penalty0
  (5):\penalty0 1404--1422, 2012.

\bibitem[Dai et~al.(2011)Dai, Gai, Krishnamachari, and Zhao]{dai2011non}
Wenhan Dai, Yi~Gai, Bhaskar Krishnamachari, and Qing Zhao.
\newblock The non-bayesian restless multi-armed bandit: A case of
  near-logarithmic regret.
\newblock In \emph{IEEE International Conference on Acoustics, Speech and
  Signal Processing (ICASSP)}, pages 2940--2943. IEEE, 2011.

\bibitem[Dai et~al.(2014)Dai, Gai, and Krishnamachari]{dai2014online}
Wenhan Dai, Yi~Gai, and Bhaskar Krishnamachari.
\newblock Online learning for multi-channel opportunistic access over unknown
  markovian channels.
\newblock In \emph{IEEE International Conference on Sensing, Communication, and
  Networking (SECON)}, pages 64--71. IEEE, 2014.

\bibitem[Gittins et~al.(1989)Gittins, Glazebrook, Weber, and
  Weber]{gittins1989multi}
John~C Gittins, Kevin~D Glazebrook, Richard Weber, and Richard Weber.
\newblock \emph{Multi-armed bandit allocation indices}, volume~25.
\newblock Wiley Online Library, 1989.

\bibitem[Lattimore and Szepesv{\'a}ri()]{lattimore2018bandit}
Tor Lattimore and Csaba Szepesv{\'a}ri.
\newblock \emph{Bandit algorithms}.
\newblock Cambridge University Press.
\newblock forthcoming.

\bibitem[Liu et~al.(2011)Liu, Liu, and Zhao]{liu2011logarithmic}
Haoyang Liu, Keqin Liu, and Qing Zhao.
\newblock Logarithmic weak regret of non-bayesian restless multi-armed bandit.
\newblock In \emph{IEEE International Conference on Acoustics, Speech and
  Signal Processing (ICASSP)}, pages 1968--1971. IEEE, 2011.

\bibitem[Liu et~al.(2013)Liu, Liu, and Zhao]{liu2013learning}
Haoyang Liu, Keqin Liu, and Qing Zhao.
\newblock Learning in a changing world: Restless multiarmed bandit with unknown
  dynamics.
\newblock \emph{IEEE Transactions on Information Theory}, 59\penalty0
  (3):\penalty0 1902--1916, 2013.

\bibitem[Liu and Zhao(2010)]{liu2010indexability}
Keqin Liu and Qing Zhao.
\newblock Indexability of restless bandit problems and optimality of whittle
  index for dynamic multichannel access.
\newblock \emph{IEEE Transactions on Information Theory}, 56\penalty0
  (11):\penalty0 5547--5567, 2010.

\bibitem[Meshram et~al.(2016)Meshram, Gopalan, and
  Manjunath]{meshram2016optimal}
Rahul Meshram, Aditya Gopalan, and D~Manjunath.
\newblock Optimal recommendation to users that react: Online learning for a
  class of pomdps.
\newblock In \emph{IEEE 55th Conference on Decision and Control (CDC)}, pages
  7210--7215. IEEE, 2016.

\bibitem[Meshram et~al.(2017)Meshram, Gopalan, and
  Manjunath]{meshram2017restless}
Rahul Meshram, Aditya Gopalan, and D~Manjunath.
\newblock Restless bandits that hide their hand and recommendation systems.
\newblock In \emph{IEEE International Conference on Communication Systems and
  Networks (COMSNETS)}, pages 206--213. IEEE, 2017.

\bibitem[Meshram et~al.(2018)Meshram, Manjunath, and
  Gopalan]{meshram2018whittle}
Rahul Meshram, D~Manjunath, and Aditya Gopalan.
\newblock On the whittle index for restless multiarmed hidden markov bandits.
\newblock \emph{IEEE Transactions on Automatic Control}, 63\penalty0
  (9):\penalty0 3046--3053, 2018.

\bibitem[Ortner et~al.(2012)Ortner, Ryabko, Auer, and Munos]{ortner2012regret}
Ronald Ortner, Daniil Ryabko, Peter Auer, and R{\'e}mi Munos.
\newblock Regret bounds for restless markov bandits.
\newblock In \emph{International Conference on Algorithmic Learning Theory},
  pages 214--228. Springer, 2012.

\bibitem[Osband et~al.(2013)Osband, Russo, and Van~Roy]{osband2013more}
Ian Osband, Daniel Russo, and Benjamin Van~Roy.
\newblock (more) efficient reinforcement learning via posterior sampling.
\newblock In \emph{Advances in Neural Information Processing Systems}, pages
  3003--3011, 2013.

\bibitem[Papadimitriou and Tsitsiklis(1999)]{papadimitriou1999complexity}
Christos~H Papadimitriou and John~N Tsitsiklis.
\newblock The complexity of optimal queuing network control.
\newblock \emph{Mathematics of Operations Research}, 24\penalty0 (2):\penalty0
  293--305, 1999.

\bibitem[Robbins(1952)]{robbins1952some}
Herbert Robbins.
\newblock Some aspects of the sequential design of experiments.
\newblock \emph{Bulletin of the American Mathematical Society}, 58\penalty0
  (5):\penalty0 527--535, 1952.

\bibitem[Russo and Van~Roy(2014)]{russo2014learning}
Daniel Russo and Benjamin Van~Roy.
\newblock Learning to optimize via posterior sampling.
\newblock \emph{Mathematics of Operations Research}, 39\penalty0 (4):\penalty0
  1221--1243, 2014.

\bibitem[Tekin and Liu(2012)]{tekin2012online}
Cem Tekin and Mingyan Liu.
\newblock Online learning of rested and restless bandits.
\newblock \emph{IEEE Transactions on Information Theory}, 58\penalty0
  (8):\penalty0 5588--5611, 2012.

\bibitem[Thompson(1933)]{thompson1933likelihood}
William~R Thompson.
\newblock On the likelihood that one unknown probability exceeds another in
  view of the evidence of two samples.
\newblock \emph{Biometrika}, 25\penalty0 (3/4):\penalty0 285--294, 1933.

\bibitem[Whittle(1988)]{whittle1988restless}
Peter Whittle.
\newblock Restless bandits: Activity allocation in a changing world.
\newblock \emph{Journal of applied probability}, 25\penalty0 (A):\penalty0
  287--298, 1988.

\end{thebibliography}

\newpage
\appendix
\section{Proof of Theorem \ref{thm:main}}
We begin by introducing a technical lemma. 

\begin{lemma}
\label{lemma:technical}
Let $a_i, b_i \in [0, 1]$ and $|a_i - b_i| \leq \Delta_i$ for $i \in [k]$. Then we can show
\begin{align}
\begin{split}
\label{lemma:sumProd}
    \sum_{x \in \{0, 1\}^k}|\prod_i a_i^{x_i}(1-a_i )^{1-x_i} 
    - \prod_i b_i^{x_i}&(1-b_i )^{1-x_i}|
    \leq 2 \sum_{j=1}^{k} \Delta_j.
\end{split}
\end{align}
\end{lemma}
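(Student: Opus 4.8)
The plan is to prove the bound by induction on $k$, since the product structure decomposes naturally by peeling off one coordinate at a time. For $k=1$ the left side is $|a_1 - b_1| + |(1-a_1) - (1-b_1)| = 2|a_1 - b_1| \le 2\Delta_1$, which establishes the base case. For the inductive step, I would split the sum over $x \in \{0,1\}^k$ according to the value of $x_k \in \{0,1\}$, writing each product as (the coordinate-$k$ factor) times (the product over $i \in [k-1]$). This expresses the left-hand side in terms of $a_k, b_k$ and the two subproducts over the first $k-1$ coordinates.

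The key algebraic move is a telescoping/triangle-inequality trick on each term: to bound $|a_k \prod_{i<k} a_i^{x_i}(1-a_i)^{1-x_i} - b_k \prod_{i<k} b_i^{x_i}(1-b_i)^{1-x_i}|$, I would insert and subtract the mixed term $a_k \prod_{i<k} b_i^{x_i}(1-b_i)^{1-x_i}$, giving
\begin{align*}
	|a_k (P_a - P_b)| + |(a_k - b_k) P_b|
\end{align*}
where $P_a, P_b$ denote the subproducts. Summing over all $x$ with $x_k=1$ and over all $x$ with $x_k=0$ (which contributes the analogous bound with $1-a_k, 1-b_k$), the first type of term sums to $(a_k + (1-a_k)) \sum_{x \in \{0,1\}^{k-1}} |P_a - P_b| = \sum_{x} |P_a - P_b|$, which is at most $2\sum_{j=1}^{k-1}\Delta_j$ by the inductive hypothesis (note $a_k, 1-a_k \in [0,1]$ and they sum to $1$). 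The second type of term sums to $|a_k - b_k|\sum_{x} P_b + |(1-a_k)-(1-b_k)|\sum_x P_b = 2|a_k - b_k| \le 2\Delta_k$, using that $\sum_{x \in \{0,1\}^{k-1}} P_b = \prod_{i<k}(b_i + (1-b_i)) = 1$. Adding the two contributions yields $2\sum_{j=1}^{k-1}\Delta_j + 2\Delta_k = 2\sum_{j=1}^k \Delta_j$, completing the induction.

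The main obstacle, such as it is, is purely bookkeeping: making sure the splitting by $x_k$ is done cleanly and that the coefficients $a_k$ and $1-a_k$ (resp.\ the normalizations $\sum_x P_b = 1$) are tracked correctly so that the first family of terms collapses to exactly one copy of the inductive bound rather than two. There is no analytic difficulty; the only thing to be careful about is that $\Delta_i$ need not be small, so I must avoid any argument that relies on $\Delta_i \ll 1$ — the bound above does not, since it uses only $a_i, b_i \in [0,1]$. An alternative, non-inductive route would be to write the difference of products as a telescoping sum $\sum_{i=1}^k (\text{terms agreeing with } a \text{ on coords} <i, \text{ with } b \text{ on coords} >i)$ and bound the $\ell_1$ norm over $x$ of each telescoping layer by $2\Delta_i$ using that marginalizing a product distribution over the other coordinates gives $1$; this gives the same constant and may be cleaner to write, but the induction is the most transparent.
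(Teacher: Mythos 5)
Your proof is correct and rests on the same two ingredients as the paper's: a telescoping/triangle-inequality split of the product difference with one coordinate isolated, plus the observation that summing a product of factors $b_i^{x_i}(1-b_i)^{1-x_i}$ over the remaining coordinates gives $1$ (so the $\Delta_k$ term picks up coefficient exactly $2$). The paper organizes this as a single telescoping sum $|\prod_i c_i - \prod_i d_i| \le \sum_{j}(\prod_{i<j}c_i)\Delta_j(\prod_{i>j}d_i)$ followed by counting the coefficient of each $\Delta_j$ after summing over $x$ --- which is precisely the ``alternative, non-inductive route'' you describe at the end --- while your induction peels one coordinate at a time; the two are the same argument in different packaging, and there is no gap.
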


\begin{proof}
Fix a binary vector $x$. For simplicity, let $c_{i} = a_i^{x_i}(1-a_i )^{1-x_i}$ and $d_{i} = b_i^{x_i}(1-b_i )^{1-x_i}$. Since $x_{i}$ is either $0$ or $1$, we have $|c_{i} - d_{i}| = |a_{i} - b_{i}| \le \Delta_{i}$. Then we can deduce
\begin{align*}
    |\prod_{i=1}^k c_{i} - \prod_{i=1}^k d_{i}|
    &\leq (\prod_{i=1}^{k-1} c_{i})|c_{k} - d_{k}| + |\prod_{i=1}^{k-1} c_{i} - \prod_{i=1}^{k-1} d_{i}| d_{k}\\
    &\leq (\prod_{i=1}^{k-1} c_{i}) \Delta_k + |\prod_{i=1}^{k-1} c_{i} - \prod_{i=1}^{k-1} d_{i}| d_{k}\\
    &\leq (\prod_{i=1}^{k-1} c_{i}) \Delta_k + (\prod_{i=1}^{k-2} c_{i}) \Delta_{k-1} d_{k} +  |\prod_{i=1}^{k-2} c_{i} - \prod_{i=1}^{k-2} d_{i}| d_{k-1}d_{k}\\
    &\leq \cdots \\
    &\leq \sum_{j=1}^k (\prod_{i=1}^{j-1} c_{i}) \Delta_j (\prod_{i=j+1}^{k} d_{i}).
\end{align*}
When summing up for all binary vectors $x$, we can write the coefficient of $\Delta_{j}$ as
\begin{align*}
	\sum_{x \in \{0, 1\}^{k}} (\prod_{i=1}^{j-1} c_{i}) (\prod_{i=j+1}^{k} d_{i})
	&= (\prod_{i=1}^{j-1} \sum_{x_{i} \in \{0, 1\}}c_{i})(\sum_{x_{j} \in \{0, 1\}} 1)(\prod_{i=j+1}^{k} \sum_{x_{i} \in \{0, 1\}}d_{i})\\
	&= (\prod_{i=1}^{j-1} 1) 2 (\prod_{i=j+1}^{k} 1) \\
	&= 2,
\end{align*}
where the second equality holds because $\sum_{x \in \{0, 1\}}a^{x}(1-a)^{1-x} = a + (1 - a) = 1$. This completes the proof. 
\end{proof}

Now we prove the main theorem. 

\label{section:mainProof}
\newtheorem*{thm:main}{Theorem~\ref{thm:main}}
\begin{thm:main}{\bf(Bayesian regret bound of Thompson sampling)}
The Bayesian regret of Algorithm \ref{alg:tsrmab} satisfies the following bound
\begin{equation*}
    BR(T) 
    = \bigO (\sqrt{K\eplen^3 N^{3}T \log T})
    = \bigO (\sqrt{mK\eplen^4 N^{3} \log (m\eplen)}).
\end{equation*}
\end{thm:main}

\begin{proof}
We fix an episode $l$ and analyze the regret in this episode. Let $t_{l} = (l-1)\eplen$ so that the episode starts at time $t_{l} + 1$. Define
\begin{equation*}
    N_l(k, r, n) = \sum_{t=1}^{t_l} \ind\{A_{t, k} = 1, r_k = r, n_k = n\}.
\end{equation*}
It counts the number of rounds where the arm $k$ was chosen by the learner with history $r_{k} = r$ and $n_{k} = n$ (see \eqref{eq:history} for definition). Note that 
\begin{equation*}
    k \in [K], r \in \{0, 1, \initDist(k)\},\text{ and } n \in [\eplen],
\end{equation*}
where $\initDist(k)$ is the initial success rate of the arm $k$. This implies there are $3KL$ tuples of $(k, r, n)$. 

Let $\omega^\bandit(k, r, n)$ denote the conditional probability of $X_{k} = 1$ given a history $(r, n)$ and a system parameter $\bandit$. Also let $\hat{\omega}(k, r, n)$ denote the empirical mean of this quantity (using $N_{l}(k, r, n)$ past observations and set the estimate to $0$ if $N_{l}(k, r, n)=0$). Then define
\begin{equation*}
    \bandits_l = \{\bandit ~|~ \forall (k, r, n),~~
    |(\hat{\omega} - \omega^\bandit)(k, r, n)|  
    < \sqrt{\frac{2 \log(1/\delta)}{1\vee N_{l}(k, r, n)}} \}.
\end{equation*}
Since $\hat{\omega}(k, r, n)$ is $\history_{t_{l}}$-measurable, so is the set $\bandits_{l}$. Using the Hoeffding inequality, one can show $\prob(\trueBandit \notin \bandits_l) = \prob(\bandit_l \notin \bandits_l) \leq 3\delta K\eplen$.

We now turn our attention to the following Bellman operator
\begin{align*}
    \T^{\bandit}_{\pi_l}&V^{\bandit_l}_{\pi_l, t} (\history_{t-1})
    = \E_{\bandit, \pi_l} [A_{t_l + t} \cdot X_{t_l + t} +  V^{\bandit_l}_{\pi_l, t}(\history_{t})|\history_{t-1}].
\end{align*}
Since $\pi_{l}$ is a deterministic policy, $A_{t_{l}+t}$ is also deterministic given $\history_{t-1}$ and $\pi_{l}$. Let $(k_1, \dots, k_N)$ be the active arms at time $t_l + t$ and write $\omega^{\bandit}(k_i, r_{k_i}, n_{k_i}) = \omega_{\bandit, i}$. Then we can rewrite
\begin{align}
\begin{split}
\label{eq:main1a}
    \T^{\bandit}_{\pi_l}&V^{\bandit_l}_{\pi_l, t} (\history_{t-1})
    = \sumin \omega_{\bandit, i} + \sum_{x \in \{0, 1\}^N} P^{\bandit}_{x} V^{\bandit_l}_{\pi_l, t}(\history_{t-1} \cup (A_{t_l + t}, x)),
\end{split}
\end{align}
where $P^{\bandit}_{x} = \prod_{i=1}^N \omega_{\bandit, i}^{x_i} (1-\omega_{\bandit, i})^{1 - x_i}$. 
Under the event that $\trueBandit, \bandit_l \in \bandits_l$, we have 
\begin{equation}
\label{eq:main6a}
    |\omega_{\bandit_{l}, i} - \omega_{\trueBandit, i}| 
    < 1 \wedge \sqrt{\frac{8 \log(1/\delta)}{1\vee N_{l}(k_i, r_{k_i}, n_{k_i})}} =: \Delta_i(t_{l}+t),
\end{equation}
where the dependence on $t_{l}+t$ comes from the mapping from $i$ to $k_{i}$. Lemma \ref{lemma:technical} provides 
\begin{equation}
\label{eq:main2a}
	\sum_{x \in \{0, 1\}^{N}} |P^{\bandit_{l}}_{x} - P^{\trueBandit}_{x}| 
	\le 2 \sumin \Delta_{i}(t_{l}+t).
\end{equation}
From \eqref{eq:main1a}, \eqref{eq:main2a}, and the fact that $|V^{\bandit}_{\pi, t}| \le \eplen N$, we obtain given $\history_{t-1}$ and the event $\trueBandit, \bandit_{l} \in \bandits_{l}$,
\begin{align*}
	|(\T^{\trueBandit}_{\pi_l} - \T^{\bandit_{l}}_{\pi_l})V^{\bandit_l}_{\pi_l, t} (\history_{t-1})|
	&\le (2\eplen N +1) \sumin \Delta_{i}(t_{l}+t)
	\le 3\eplen N \sumin \Delta_{i}(t_{l}+t).
\end{align*}
Then by applying Lemma \ref{lemma:DP}, we get
\begin{equation*}
	|V^{\bandit_l}_{\pi_l, 1}(\emptyset) - V^{\trueBandit}_{\pi_l, 1}(\emptyset)|
	\le 3\eplen N \E_{\trueBandit, \pi_{l}}\sumtl \sumin \Delta_{i}(t_{l}+t).
\end{equation*}
The above inequality holds whenever $\trueBandit, \bandit_{l} \in \bandits_{l}$. When $\trueBandit \notin \bandits_{l}$ or $\bandit_{l} \notin \bandits_{l}$, which happens with probability less than $6\delta K \eplen$, we have a trivial bound $|V^{\bandit_l}_{\pi_l, 1}(\emptyset) - V^{\trueBandit}_{\pi_l, 1}(\emptyset)| \le \eplen N$. We can deduce 
\begin{align*}
	|V^{\bandit_l}_{\pi_l, 1}(\emptyset) &- V^{\trueBandit}_{\pi_l, 1}(\emptyset)| 
	\le 3\eplen N \ind(\trueBandit, \bandit_{l} \in \bandits_{l})\E_{\trueBandit, \pi_{l}}\sumtl\sumin \Delta_{i}(t_{l}+t) + 6\delta K\eplen^{2}N.
\end{align*}

Combining this with Lemma \ref{lemma:regretDecomp}, we can show 
\begin{align}
\begin{split}
\label{eq:main3a}
	BR&(T) 
	\le  6\delta \epnum K \eplen^{2} N
	+ \E_{\trueBandit \sim Q} 3\eplen N \sumll \ind(\trueBandit, \bandit_{l} \in \bandits_{l})\E_{\trueBandit, \pi_{l}} \sumtl\sumin \Delta_{i}(t_{l}+t).
\end{split}
\end{align}

We further analyze the summation to finish the argument. Note that for this summation, we have $\trueBandit, \bandit_{l} \in \bandits_{l}$. We shorten $N_{l}(k_i, r_{k_i}, n_{k_i})$ to $N_{l}$ for simplicity. By the definition of $\Delta_{i}$ in \eqref{eq:main6a}, we get
\begin{align}
\begin{split}
\label{eq:main4a}    
    \sumll \sumtl \sumin \Delta_i(t_l + t)
    &\leq \sumll \sumtl \sumin \ind \{N_{l} \leq \eplen\} + \Delta_i \ind \{N_{l} > \eplen\} \\
    &\leq 6K\eplen^2 + \sumll \sumtl \sumin \ind \{N_{l} > \eplen\} \sqrt{\frac{8 \log(1/\delta)}{N_{l}}},
\end{split}
\end{align}
where the second inequality holds because there are $3KL$ possible tuples of $(k, r, n)$ and a tuple can contribute at most $2L$ to the first summation. 

We can bound the second term as follows
\begin{align}
\begin{split}
\label{eq:main5a}    
    \sumll \sumtl \sumin \ind \{N_{l} > \eplen\} \sqrt{\frac{1}{N_{l}}}
    &= \sumll \sum_{(k, r, n)} \ind \{N_{l} > \eplen\}(N_{l+1} - N_{l})\sqrt{\frac{1}{N_{l}}} \\
    &\leq \sumll \sum_{(k, r, n)} (N_{l+1} - N_{l})\sqrt{\frac{2}{N_{l+1}}} \\
    &\leq \sqrt{8} \sum_{(k, r, n)} \sqrt{N_{\epnum+1}(k, r, n)} \\
    &\leq \sqrt{24K\eplen NT}.
\end{split}
\end{align}
For the first inequality, we use $N_{l+1} \le N_{l} + L \le 2N_{l}$. The second inequality holds due to the integral trick. Finally, the last inequality holds by the Cauchy-Schwartz inequality along with the fact that $\sum_{(k, r, n)} N_{m+1}(k, r, n) = NT$. 

Combining \eqref{eq:main3a}, \eqref{eq:main4a}, \eqref{eq:main5a}, and our assumption that $T = \epnum\eplen$, we obtain
\begin{equation*}
	BR(T) = \bigO(\delta K\eplen N T + K\eplen^{3}N + \sqrt{K\eplen^{3}N^{3}T\log(1/\delta)}). 
\end{equation*}
Since $NT$ is a trivial upper bound of $BR(T)$, we may ignore the $K\eplen^{3}N$ term. Setting $\delta = \frac{1}{T}$ completes the proof. 
\end{proof}

\end{document}